\newtheorem{proposition}{Proposition}
\newtheorem{corollary}{Corollary}
\def\eqref#1{equation~\ref{#1}}
\def\1{\bm{1}}
\DeclareMathAlphabet{\mathsfit}{\encodingdefault}{\sfdefault}{m}{sl}
\SetMathAlphabet{\mathsfit}{bold}{\encodingdefault}{\sfdefault}{bx}{n}
\newcommand{\KL}{D_{\mathrm{KL}}}
\DeclareMathOperator*{\argmax}{arg\,max}
\DeclareMathOperator*{\argmin}{arg\,min}
\title{Learning Generative Models using Denoising Density Estimators}
\author{%
  Siavash A. Bigdeli \\
  CSEM, Neuchâtel\\ Switzerland\\
  \\
	\And
	Geng Lin \\
  University of Maryland\\ College Park, USA \\
		\AND
	Tiziano Portenier \\
  ETHZ, Zurich\\ Switzerland \\
	  \And
	L. Andrea Dunbar \\
  CSEM, Neuchâtel\\ Switzerland \\
		\And
	Matthias Zwicker \\
  University of Maryland\\ College Park, USA \\
}
\begin{document}

\maketitle

\begin{abstract}

Learning probabilistic models that can estimate the density of a given set of samples, and generate samples from that density, is one of the fundamental challenges in unsupervised machine learning. We introduce a new generative model based on denoising density estimators (DDEs), which are scalar functions parameterized by neural networks, that are efficiently trained to represent kernel density estimators of the data. Leveraging DDEs, our main contribution is a novel technique to obtain generative models by minimizing the KL-divergence directly. We prove that our algorithm for obtaining generative models is guaranteed to converge to the correct solution. Our approach does not require specific network architecture as in normalizing flows, nor use ordinary differential equation solvers as in continuous normalizing flows. Experimental results demonstrate substantial improvement in density estimation and competitive performance in generative model training.

\end{abstract}

\section{Introduction}
Learning generative probabilistic models from raw data is one of the fundamental problems in unsupervised machine learning. 
These models enable sampling from the probability density represented by the input data, or also performing density estimation and inference of latent variables. 
Recently, the use of deep neural networks has led to significant advances in this area. For example, generative adversarial networks~\citep{Goodfellow2014GAN} can be trained to sample very high dimensional densities, but they do not provide density estimation or inference. Inference in Boltzman machines~\citep{salakhutdinov2009deep} is tractable only under approximations~\citep{WELLING200319}. Variational autoencoders~\citep{Kingma2014VAE} provide functionality for both (approximate) inference and sampling. Finally, normalizing flows~\citep{Dinh2014NICE} perform all three operations (sampling, density estimation, inference) efficiently.

In this paper we introduce a novel type of generative model based on what we call denoising density estimators (DDEs), which supports efficient sampling and density estimation. Our approach to construct a sampler is straightforward: assuming we have a density estimator that can be efficiently trained and evaluated, we learn a sampler by forcing its generated density to be the same as the input data density via minimizing their Kullback-Leibler (KL) divergence. In particular, we use the {\em reverse} KL divergence, which avoids saddle points when the two distributions are non-overlapping. In our approach, the density estimator is derived from the theory of denoising autoencoders, hence our term {\em denoising density estimator}. Compared to normalizing flows, a key advantage of our theory is that it does not require any specific network architecture, except differentiability, and we do not need to solve ordinary differential equations (ODE) like in continuous normalizing flows. In summary, our main contribution is a novel approach to obtain a generative model by explicitly estimating the energy (un-normalized density) of the generated and true data distributions and minimizing the statistical divergence of these densities. 

\section{Related Work}
Generative adversarial networks~\citep{Goodfellow2014GAN} are currently the most widely studied type of generative probabilistic models for very high dimensional data.
GANs are often difficult to train, however, and they can suffer from mode-collapse, sparking renewed interest in alternatives. A common approach is to formulate these models as mappings between a latent space and the data domain, and one way to categorize them is to consider the constraints on this mapping. In normalizing flows~\citep{Dinh2014NICE,Rezend2015NFl} the mapping is invertible and differentiable, such that the data density can be estimated using the determinant of its Jacobian, and inference can be perfomed via the inverse mapping. Normalizing flows can be trained simply using maximum likelihood estimation (MLE)~\citep{Dinh2017NVP}. The challenge, however, is to design efficient computational structures for the required operations~\citep{Huang2018NAF,Kingma2018GLOW}. \citet{Chen2018NODE} and \citet{grathwohl2019ffjord} derive continuous normalizing flows by parameterizing the dynamics (the time derivative) of an ODE using a neural network,
but it comes at the cost of solving ODEs to produce outputs.
In contrast, in variational techniques the relation between the latent variables and data is probabilistic, usually expressed as a Gaussian likelihood function. Hence computing the marginal likelihood requires integration over latent space. To make this tractable, it is common to bound the marginal likelihood using the evidence lower bound~\citep{Kingma2014VAE}. 
Recently,~\citet{Li2018IMLE} proposed an approximate form of MLE, which they call implicit MLE (IMLE), that can also be performed without requiring invertible mappings. As a disadvantage, IMLE requires nearest neighbor queries in (high dimensional) data space.

Not all generative models include a latent space, for example autoregressive models~\citep{Oord2016PCNN} or denoising autoencoders (DAEs)~\citep{JMLR:v15:alain14a}. In particular,~\citet{JMLR:v15:alain14a} and~\citet{Saremi2019NeuralEB} use the well known relation between DAEs and the score of the corresponding data distributions~\citep{Vincent:2011:CSM,Raphan:2011:LSE} to construct an approximate Markov Chain sampling procedure.
Similarly, \citet{bigdeli2017image} and~\citet{bigdeli2017deep} use DAEs to learn the gradient of image densities for optimizing maximum a-posteriori problems in image restoration.
We build on DAEs, but formulate an estimator for the un-normalized, scalar density, rather than for the score (a vector field). This is crucial to allow us to train a generator instead of requiring Markov chain sampling, which has the disadvantages of requiring sequential sampling and producing correlated samples.

Instead of using a denoising objective, score-matching can also be achieved by minimizing Stein's loss for the true and estimated density gradients.
\citet{kingma2010regularized} use a regularized version of the loss to parametrize a product-of-experts model for images, and
\citet{li2019learning} train deep density estimators based on exponential family kernels.
These techniques require computation of third order derivatives, however, limiting the dimensionality of their models.
\citet{Song2019GMG} extend this approach by introducing a sliced score-matching objective that leads to more efficient training.
Unlike these techniques, DDEs are optimized using a denoising objective, hence they can be optimized without approximations, nor higher order derivatives. This allows us to efficiently train an exact generator that scales well with the data dimensionality.
In addition, \citet{Song2019GMG} formulate a generative model using Langevin dynamics,
which requires an iterative sampling procedure that provides exact sampling only asymptotically.
Similarly, \citet{dai2019exponential} use adversarial training to learn dynamics for generating samples.
Unlike these approaches, we do not require an iterative sampling scheme and our generator produces samples in single forward passes.

Other energy-based techniques for generative models include the work by~\citet{kim2016deep}, who use directed graphs to learn densities in latent space and to train their generator.
The approximation in this approach limits their generalization to complex and higher dimension datasets.
Using kernel exponential families, \citet{dai2019kernel} train a density estimator at the same time as their dual generator.
Similar to other score-matching optimizations, their approach requires quadratic computations with respect to the input dimensions at each gradient calculation.
Moreover, they only report generated results on 2D toy examples.
Table~\ref{tbl:comparison} summarizes the differences of our approach to GANs, Score-Matching, and Normalizing Flows.


\begin{table}[t]
\bgroup
\setlength{\tabcolsep}{3.5pt}
\begin{center}
\begin{tabular}[c]{l c c c c }
\hlineB{3}
Property & GAN & Score-Matching & Normalizing Flows & Ours \\
\hline
Provides density & (-) & - & \checkmark & \checkmark \\
Forward sampling model & \checkmark & iterative & \checkmark & \checkmark \\
Exact sampling & \checkmark & asymptotic & \checkmark & \checkmark \\
Free net architecture & \checkmark & \checkmark & - & \checkmark \\
\hlineB{3}
\end{tabular}
\end{center}
\egroup
\caption
{
Comparison of different deep generative approaches based on GANs, Score-Matching, Normalizing Flows, and our proposed technique. Adversarial density estimation can be achieved using the approach by~\citet{abbasnejad2019generative} using a suitable training objective. 
}
\label{tbl:comparison}
\end{table}

\section{Denoising Density Estimators (DDEs)}
\label{sec:dde}

First we describe how to estimate a density using a variant of denoising autoencoders (DAEs). More precisely, this approach allows us to obtain the density smoothed by a Gaussian kernel, which is equivalent to kernel density estimation~\citep{parzen1962}, up to a normalizing factor. Originally, the optimal DAE $r: \mathbb{R}^n \rightarrow \mathbb{R}^n$~\citep{Vincent:2011:CSM,JMLR:v15:alain14a} is defined as the function minimizing the following denoising loss, 
\begin{align}
&\mathcal{L}_{\mathrm{DAE}}(r;p, \sigma_{\eta}) = \mathbb{E}_{x\sim p,\eta \sim \mathcal{N}(0,\sigma_\eta^2)}\left[ \| r(x+\eta) - x\|^2 \right],
\label{eq:denoisingloss}
\end{align}
where the data $x$ is distributed according to a density $p$ over $\mathbb{R}^n$, and $\eta \sim \mathcal{N}(0,\sigma_\eta^2)$ represents $n$-dimensional, isotropic additive Gaussian noise with variance $\sigma_\eta^2$. It has been shown~\citep{robbins1956,Raphan:2011:LSE,bigdeli2017image} that the optimal DAE $r^*(x)$ minimizing $\mathcal{L}_{\mathrm{DAE}}$ can be expressed as follows, which is also known as Tweedie's formula,
\begin{align}
r^*(x) = x + \sigma_\eta^2 \nabla_x \log \tilde{p}(x),
\end{align}
where $\nabla_x$ is the gradient with respect to the input $x$, $\tilde{p}(s) = [p*k](x)$ denotes the convolution between the data and noise distributions $p(x)$, and $k=\mathcal{N}(0,\sigma_\eta^2)$. 
Inspired by this result, we reformulate the DAE-loss as a noise estimation loss,
\begin{align}
&\mathcal{L}_{\mathrm{NEs}}(f;p, \sigma_{\eta}) = \mathbb{E}_{x\sim p,\eta \sim \mathcal{N}(0,\sigma_\eta^2)}\left[ \|  f(x+\eta) +\eta / \sigma_\eta^2\|^2 \right],
\label{eq:noiseestimationloss}
\end{align}
where $f: \mathbb{R}^n \rightarrow \mathbb{R}^n$ is a vector field that estimates the noise vector $-\eta / \sigma_\eta^2$.
Similar to \citet{Vincent:2011:CSM} and \citet{JMLR:v15:alain14a}, we formulate the following proposition and provide the proof in the supplementary material:

\begin{proposition}
There is a unique minimizer $f^*(x) = \argmin_f \mathcal{L}_{\mathrm{NEs}}(f;p,\sigma_{\eta})$ that satisfies
\begin{align}
f^*(x) = \nabla_x \log \tilde{p}(x) = \nabla_x \log [p*k](x).
\end{align}
That is, the optimal estimator corresponds to the gradient of the logarithm of the Gaussian smoothed density $\tilde{p}(x)$, that is, the score of the density.
\label{prop:nes}
\end{proposition}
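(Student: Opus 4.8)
The plan is to exploit the fact that $\mathcal{L}_{\mathrm{NEs}}$ decomposes into a pointwise quadratic in the values of $f$, so that the minimization can be carried out independently at each point of the (smoothed) domain. First I would introduce the joint density of a clean/noisy pair: writing $x' = x+\eta$, the pair $(x,x')$ has joint density $p(x)\,k(x'-x)$ with $k=\mathcal{N}(0,\sigma_\eta^2)$, whose $x'$-marginal is exactly $\tilde p(x') = [p*k](x')$. Substituting $\eta = x'-x$ rewrites the loss as $\mathbb{E}_{x,x'}\big[\,\|f(x') + (x'-x)/\sigma_\eta^2\|^2\,\big]$, and conditioning on $x'$ gives $\mathbb{E}_{x'}\big[\,\mathbb{E}_{x\mid x'}\big[\|f(x') + (x'-x)/\sigma_\eta^2\|^2\big]\,\big]$.

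Second, for each fixed $x'$ the inner expectation is a strictly convex quadratic in the vector $f(x')\in\mathbb{R}^n$; setting its gradient to zero yields the unique pointwise minimizer $f^*(x') = \mathbb{E}_{x\mid x'}\!\big[(x-x')/\sigma_\eta^2\big]$. Since $\tilde p(x')>0$ for every $x'$ (a Gaussian convolution is everywhere positive), the conditional expectation is well defined at every point, which is what delivers uniqueness of the minimizer as a function, not merely almost everywhere.

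Third, I would identify this conditional expectation with the score of $\tilde p$. Differentiating under the integral sign and using $\nabla_u k(u) = -(u/\sigma_\eta^2)\,k(u)$ for the isotropic Gaussian, one obtains $\nabla_{x'}\tilde p(x') = \int p(x)\,\tfrac{x-x'}{\sigma_\eta^2}\,k(x'-x)\,dx$. Dividing by $\tilde p(x')$ and recognizing $p(x)k(x'-x)/\tilde p(x')$ as the conditional density of $x$ given $x'$ shows $\nabla_{x'}\log\tilde p(x') = \mathbb{E}_{x\mid x'}\big[(x-x')/\sigma_\eta^2\big] = f^*(x')$, which is exactly the claimed formula; renaming $x'$ to $x$ gives the statement.

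The part that needs genuine care — the rest being routine calculus with the Gaussian — is the justification of the two interchanges. The reduction of the functional minimization to the pointwise problem requires that the objective be finite, that the candidate $f^*$ be measurable, and that any deviation from $f^*$ on a set of positive $\tilde p$-measure strictly increase the loss; all of this follows from the strict convexity of the integrand and Tonelli/Fubini once integrability is checked. The differentiation of $\tilde p$ under the integral sign holds under mild growth/integrability assumptions on $p$, using the smoothness and rapid decay of $k$ to produce an integrable dominating function. I expect this measure-theoretic bookkeeping, rather than any single computation, to be the main obstacle, and it is presumably what the authors defer to the supplementary material.
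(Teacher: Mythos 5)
Your proposal is correct and follows the standard Vincent/Alain--Bengio route that the paper itself invokes: rewrite the loss over the joint density $p(x)k(x'-x)$, condition on the noisy variable to reduce to a pointwise strictly convex quadratic whose minimizer is the conditional expectation $\mathbb{E}_{x\mid x'}[(x-x')/\sigma_\eta^2]$, and identify that with $\nabla_{x'}\log\tilde p(x')$ via the Gaussian identity $\nabla_u k(u)=-(u/\sigma_\eta^2)k(u)$. This is essentially the same argument the paper defers to its supplementary material, and your additional remarks on everywhere-positivity of $\tilde p$ and the Fubini/dominated-convergence bookkeeping only strengthen it.
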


A key observation is that the desired vector-field $f^*$ is the gradient of a scalar function and conservative. Hence we can write the noise estimation loss in terms of a scalar function $s: \mathbb{R}^n \rightarrow \mathbb{R}$ instead of the vector field $f$, which we call the denoising density estimation loss, 
\begin{align}
&\mathcal{L}_{\mathrm{DDE}}(s;p, \sigma_{\eta}) = \mathbb{E}_{x\sim p,\eta \sim \mathcal{N}(0,\sigma_\eta^2)}\left[ \|  \nabla_x s(x+\eta) +\eta / \sigma_\eta^2\|^2 \right].
\label{eq:ddeloss}
\end{align}
A similar formulation has recently been proposed by~\citet{Saremi2019NeuralEB}. Our terminology is motivated by the following corollary:
\begin{corollary} The minimizer $s^*(x) = \argmin_s \mathcal{L}_{\mathrm{DDE}}(s;p)$ satisfies
\begin{align}
s^*(x) = \log \tilde{p}(x) + C, 
\label{eq:dde}
\end{align}
with some constant $C \in \mathbb{R}$.
\label{cor:dde}
\end{corollary}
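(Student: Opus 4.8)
The plan is to reduce the statement to Proposition~\ref{prop:nes} by observing that $\mathcal{L}_{\mathrm{DDE}}$ is exactly $\mathcal{L}_{\mathrm{NEs}}$ restricted to the class of \emph{conservative} vector fields. Concretely, for any differentiable scalar function $s$ we have the identity $\mathcal{L}_{\mathrm{DDE}}(s;p,\sigma_\eta) = \mathcal{L}_{\mathrm{NEs}}(\nabla_x s;p,\sigma_\eta)$ directly from the definitions in \eqref{eq:noiseestimationloss} and \eqref{eq:ddeloss}. Hence minimizing the left-hand side over $s$ is the same as minimizing $\mathcal{L}_{\mathrm{NEs}}(f;p,\sigma_\eta)$ over the subset $\{\,f = \nabla_x s\,\}$ of all vector fields $f$.

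Next I would invoke Proposition~\ref{prop:nes}: the global minimizer of $\mathcal{L}_{\mathrm{NEs}}$ over \emph{all} vector fields is $f^*(x) = \nabla_x \log \tilde{p}(x)$, and it is unique. The crucial point, already highlighted in the text preceding \eqref{eq:ddeloss}, is that $f^*$ is itself a gradient field, so it lies in the restricted class. Therefore the constrained minimum coincides with the unconstrained minimum, and it is attained by the choice $s = \log \tilde{p}$, which is well defined and smooth because $\tilde{p} = p * k$ is a convolution with a Gaussian and hence strictly positive and $C^\infty$. (This also settles existence of a minimizer, which the corollary presupposes.) Consequently, any minimizer $s^*$ of $\mathcal{L}_{\mathrm{DDE}}$ must satisfy $\nabla_x s^*(x) = f^*(x) = \nabla_x \log \tilde{p}(x)$ for (Lebesgue-almost) every $x$, by the uniqueness clause of Proposition~\ref{prop:nes}.

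Finally I would upgrade this gradient identity to the claimed pointwise statement: since $\nabla_x\bigl(s^*(x) - \log \tilde{p}(x)\bigr) = 0$ on all of $\mathbb{R}^n$ and $\mathbb{R}^n$ is connected, the difference $s^*(x) - \log \tilde{p}(x)$ equals a constant $C \in \mathbb{R}$, i.e. $s^*(x) = \log \tilde{p}(x) + C$.

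I do not expect a substantial obstacle here; essentially all the content is carried by Proposition~\ref{prop:nes}, and the corollary is a packaging of the observation that its optimal solution happens to be conservative. The only points that require a little care are (i) making explicit the differentiability assumptions on $s$ that legitimize both the identity $\mathcal{L}_{\mathrm{DDE}}(s) = \mathcal{L}_{\mathrm{NEs}}(\nabla_x s)$ and the vanishing-gradient argument (differentiability is assumed throughout the paper), and (ii) noting that the minimizer is unique only up to the additive constant $C$ — which is precisely why the statement is phrased that way, since the denoising objective only constrains $\nabla_x s$ and is blind to overall additive shifts.
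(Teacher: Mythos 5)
Your proposal is correct and follows essentially the same route as the paper: reduce to Proposition~\ref{prop:nes} via the identity $\mathcal{L}_{\mathrm{DDE}}(s;p,\sigma_\eta)=\mathcal{L}_{\mathrm{NEs}}(\nabla_x s;p,\sigma_\eta)$, conclude $\nabla_x s^*=\nabla_x\log\tilde{p}$, and integrate to get equality up to an additive constant. The paper's proof is just a terser version of this; your added remarks (that the unconstrained minimizer is itself conservative, and the connectedness argument on $\mathbb{R}^n$) merely make explicit what the paper leaves implicit.
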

\begin{proof}
From Proposition~\ref{prop:nes} and the definition of $\mathcal{L}_{\mathrm{DDE}}(s;p)$ we know that $\nabla_x s^*(x) = \nabla_x \log \tilde{p}(x)$, which leads immediately to the corollary.
\end{proof}
In summary, we have shown how modifying the denoising autoencoder loss (Eq.~\ref{eq:denoisingloss}) into a noise estimation loss based on the gradients of a scalar function (Eq.~\ref{eq:ddeloss}) allows us to derive a density estimator (Corollary~\ref{cor:dde}), which we call the denoising density estimator (DDE). In practice, we approximate the DDE using a neural network $s(x; \theta)$.
For illustration, Figure~\ref{fig:2Ddensity} shows 2D distribution examples, which we approximate using a DDE implemented as a multi-layer perceptron.

\section{Learning Generative Models using DDEs}
\label{sec:generativemodels} 

By leveraging DDEs, our key contribution is to formulate a novel training algorithm to obtain generators for given densities, which can be represented by a set of samples or as a continuous function. In either case, we denote the smoothed data density $\tilde{p}$, which is obtained by training a DDE in case the input is given as a set of samples as described in Section~\ref{sec:dde}. We express our samplers using mappings $x=g(z)$, where $x\in \mathrm{R}^n$, and $z \in \mathrm{R}^m$ (usually $n > m$) is a latent variable, which typically has a standard normal distribution. In contrast to normalizing flows, $g(z)$ does not need to be invertible. Let us denote the distribution of $x$ induced by the generator as $q$, that is $q \sim g(z)$, and also its Gaussian smoothed version $\tilde{q} = q * k$. 

We obtain the generator by minimizing the KL divergence $\KL(\tilde{q}||\tilde{p})$ between the density induced by the generator $\tilde{q}$ and the data density $\tilde{p}$. Our algorithm is based on the following observation:

\begin{proposition}
Given a scalar function $\Delta: \mathbb{R}^n \rightarrow \mathbb{R}$ that satisfies the following conditions:
\begin{align}
\KL(\tilde{q}||\tilde{p}) = \left< \tilde{q}, \log \tilde{q} -\log \tilde{p}\right>
&> \left< \tilde{q}+\Delta, \log \tilde{q} -\log \tilde{p}\right>, \label{eq:assumption1} \\
\left<\Delta, \mathbbm{1} \right> &= 0, \label{eq:assumption2} \\
\Delta^2 &< \epsilon, \quad \mbox{(pointwise exponentiation)} \label{eq:assumption3}
\end{align}
then $\KL(\tilde{q}||\tilde{p}) > \KL(\tilde{q}+\Delta ||\tilde{p})$ for small enough $\epsilon$.
\label{pro:klupdate}
\end{proposition}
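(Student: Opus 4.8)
The plan is to expand $\KL(\tilde q + \Delta \,\|\, \tilde p)$ around $\tilde q$ and show that the first-order term is strictly negative (by assumption~\eqref{eq:assumption1}) while the second-order and higher terms are controlled by $\epsilon$ (via assumption~\eqref{eq:assumption3}), so that for $\epsilon$ small enough the decrease dominates. Concretely, I would write
\begin{align}
\KL(\tilde q + \Delta \,\|\, \tilde p) &= \left< \tilde q + \Delta,\ \log(\tilde q + \Delta) - \log \tilde p \right> \notag \\
&= \left< \tilde q + \Delta,\ \log \tilde q - \log \tilde p \right> + \left< \tilde q + \Delta,\ \log(\tilde q + \Delta) - \log \tilde q \right>. \notag
\end{align}
The first term is exactly the right-hand side of~\eqref{eq:assumption1}, hence strictly less than $\KL(\tilde q \,\|\, \tilde p)$; call the gap $\delta := \KL(\tilde q\,\|\,\tilde p) - \left< \tilde q + \Delta,\ \log \tilde q - \log \tilde p\right> > 0$. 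It remains to show the second term, which measures how much the normalization-free ``entropy-like'' part changes, is $o(1)$ as $\epsilon \to 0$, i.e.\ eventually smaller than $\delta$.

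For that second term I would use $\log(\tilde q + \Delta) - \log \tilde q = \log\!\big(1 + \Delta/\tilde q\big)$ and the elementary bound $|\,t\log(1+t) - \text{(linear part)}\,|$; more directly, $\left< \tilde q + \Delta, \log(1 + \Delta/\tilde q)\right> = \left<\tilde q, \log(1+\Delta/\tilde q)\right> + \left<\Delta, \log(1+\Delta/\tilde q)\right>$. Using $\log(1+t) \le t$ gives $\left<\tilde q, \log(1+\Delta/\tilde q)\right> \le \left<\tilde q, \Delta/\tilde q\right> = \left<\Delta, \mathbbm 1\right> = 0$ by~\eqref{eq:assumption2}, so this piece is nonpositive and can only help. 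The remaining piece $\left<\Delta, \log(1+\Delta/\tilde q)\right>$ has the sign of $\Delta^2$ pointwise (since $\Delta$ and $\log(1+\Delta/\tilde q)$ share a sign), and is bounded in magnitude by something like $\int |\Delta|\,|\Delta|/\tilde q$ when $|\Delta| \le \tilde q/2$, i.e.\ of order $\|\Delta^2\|$ weighted by $1/\tilde q$. Invoking~\eqref{eq:assumption3}, $\Delta^2 < \epsilon$ pointwise, this is $O(\epsilon)$ (on the support where $\tilde q$ is bounded below), hence strictly less than $\delta$ for $\epsilon$ small enough, which yields $\KL(\tilde q + \Delta\,\|\,\tilde p) < \KL(\tilde q\,\|\,\tilde p)$.

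The main obstacle is making the control of $\left<\Delta, \log(1+\Delta/\tilde q)\right>$ fully rigorous: the Taylor remainder carries a factor $1/\tilde q$, so the argument needs $\tilde q$ bounded away from zero on the relevant region, or an integrability assumption ensuring $\int \Delta^2/\tilde q$ is finite; since $\tilde q = q * k$ is a Gaussian-smoothed density this is plausible but should be stated. One also needs $\tilde q + \Delta \ge 0$ for $\KL(\tilde q + \Delta\,\|\,\tilde p)$ to be defined, which for small $\epsilon$ follows from $|\Delta| \le \tilde q/2$ on the support of $\tilde q$ — again hinging on a lower bound for $\tilde q$. I would state these mild regularity conditions explicitly and then the estimate above goes through: choose $\epsilon$ so that the $O(\epsilon)$ second-order contribution is below the fixed positive gap $\delta$ guaranteed by~\eqref{eq:assumption1}.
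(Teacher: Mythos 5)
Your argument is correct and rests on the same mechanism as the paper's proof: isolate the linear term $\langle \Delta, \log\tilde q - \log\tilde p\rangle$, which is strictly negative by the first assumption; kill the normalization term $\langle \tilde q, \Delta/\tilde q\rangle = \langle \Delta, \mathbbm{1}\rangle$ via the second; and argue the remainder is second order in $\Delta$, hence dominated for small $\epsilon$ by the third. The execution differs in a useful way, though. The paper Taylor-expands $\log(\tilde q + \Delta) = \log\tilde q + \Delta/\tilde q + o(\Delta^2)$ inside the inner product and then discards all quadratic pieces at once; you instead split off $\langle \tilde q + \Delta, \log(1+\Delta/\tilde q)\rangle$ exactly and observe that the piece $\langle \tilde q, \log(1+\Delta/\tilde q)\rangle$ is genuinely nonpositive by the bound $\log(1+t)\le t$ together with assumption two — so it need not be estimated at all — leaving only $\langle \Delta, \log(1+\Delta/\tilde q)\rangle$ to control. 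That makes the error term smaller and the sign bookkeeping cleaner. You also surface two regularity issues the paper leaves implicit: one needs $\tilde q + \Delta \ge 0$ for the perturbed KL to be defined, and the quadratic remainder carries a factor $1/\tilde q$, so the $O(\epsilon)$ bound requires $\tilde q$ bounded below (or $\int \Delta^2/\tilde q < \infty$) on the relevant region; since $\tilde q = q*k$ is Gaussian-smoothed this is reasonable but worth stating, as you do. One caveat applies equally to your write-up and to the paper's: the "gap $\delta$" furnished by the first assumption is itself a property of the given $\Delta$, so the phrase "for $\epsilon$ small enough" implicitly requires that the quadratic term $\int \Delta^2/\tilde q$ be small relative to that gap, not merely small in absolute terms; neither proof quantifies this coupling, and it is really a looseness in the proposition's statement rather than a defect specific to your argument.
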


\begin{proof}
We will use the first order approximation $\log (\tilde{q}+\Delta) = \log \tilde{q} + \Delta / \tilde{q} + o(\Delta^2)$, where the division is pointwise. Using $\left<\cdot,\cdot\right>$ to denote the inner product, we can write
\begin{align}
\KL(\tilde{q}+\Delta ||\tilde{p}) =& \left< \tilde{q}+\Delta, \log (\tilde{q}+\Delta) - \log \tilde{p}\right> \nonumber \\
=& \left< \tilde{q}+\Delta,  \log \tilde{q} + \Delta/\tilde{q} + o(\Delta^2) - \log \tilde{p}\right> \nonumber \\
=& \left< \tilde{q},  \log \tilde{q} - \log \tilde{p}\right> + 
 \left< \Delta,  \log \tilde{q} - \log \tilde{p}\right> \nonumber \\
& + \left< \tilde{q},  \Delta/\tilde{q} \right> +
\left< \Delta,  \Delta/\tilde{q} \right>  + O(\Delta^2).
\end{align}
This means
\begin{align}
\KL(\tilde{q}+\Delta ||\tilde{p}) - \KL(\tilde{q} ||\tilde{p}) = \left< \Delta,  \log \tilde{q} - \log \tilde{p}\right>
+ \left< \tilde{q},  \Delta/\tilde{q} \right> 
+ \left< \Delta,  \Delta/\tilde{q} \right>   + O(\Delta^2) < 0
\end{align}
because the first term on the right hand side is negative (first assumption in Equation~\ref{eq:assumption1}), the second term is zero (second assumption in Equation~\ref{eq:assumption2}), and the third and fourth terms are quadratic in $\Delta$ and can be ignored for $\Delta < \epsilon$ when $\epsilon$ is small enough (third assumption in Equation~\ref{eq:assumption3}).
\end{proof}
Based on the above observation, Algorithm~\ref{algo:generator} minimizes $\KL(\tilde{q}||\tilde{p})$ by iteratively computing updated densities $\tilde{q}+\Delta$ that satisfy the conditions from Proposition~\ref{pro:klupdate}, hence $\KL(\tilde{q}||\tilde{p}) > \KL(\tilde{q}+\Delta ||\tilde{p})$. This is guaranteed to converge to a global minimum, because $\KL(\tilde{q}||\tilde{p})$ is convex in $\tilde{q}$. 

At the beginning of each iteration in Algorithm~\ref{algo:generator} (Line 3), by definition $q$ is the density obtained by sampling our generator $x = g(z;\phi), z \sim \mathcal{N}(0,1)$ ($n$-dimensional standard normal distribution), and the generator is a neural network with parameters $\phi$. In addition, $\tilde{q} = q*k$ is defined as the density obtained by sampling $x = g(z;\phi) + \eta, z \sim \mathcal{N}(0,1), \eta \sim \mathcal{N}(0,\sigma_\eta^2)$. Finally, the DDE $s^{\tilde{q}}$ correctly estimates $\tilde{q}$, that is $\log \tilde{q}(x) = s^{\tilde{q}}(x) + C$. 
In each iteration, we update the generator such that its density is changed by a small $\Delta$ that satisfies the conditions from Proposition~\ref{pro:klupdate}. We achieve this by computing a gradient descent step of $\mathbb{E}_{x = g(z;\phi) + \eta} \left[ s^{\tilde{q}}(x) - \log \tilde{p}(x) \right] + C$ with respect to the generator parameters $\phi$ (Line 4). The constant $C$ can be ignored since we only need the gradient ($\tilde{q}$ always integrate to one after any generator update). A small enough learning rate guarantees that condition one (Equation~\ref{eq:assumption1}) in Proposition~\ref{pro:klupdate} is satisfied. The second condition (Equation~\ref{eq:assumption2}) is satisfied because we update the distribution by updating its generator, and the third condition (Equation~\ref{eq:assumption3}) is also satisfied under a small enough learning rate (and assuming the generator network is Lipschitz continuous). After updating the generator, we update the DDE to correctly estimate the new density produced by the updated generator (Line 6). Note that in practice, we perform fixed number of iterations (5-10 steps similar to GANs) to optimize the DDE, which did not lead to any instabilities.

Note that it is crucial in the first step in the iteration in Algorithm~\ref{algo:generator} that we sample using $g(z;\phi) + \eta$ and not $g(z;\phi)$. This allows us, in the second step, to use the updated $g(z;\phi)$ to train a DDE $s^{\tilde{q}}$ that exactly (up to a constant) matches the density generated by $g(z;\phi) + \eta$. Even though in this approach we only minimize the KL divergence with the ``noisy'' input density $\tilde{p}$, the sampler $g(z;\phi)$ still converges to a sampler of the underlying density $p$ in theory (exact sampling).

\begin{algorithm}
	\caption{Training steps for the generator. The input to the algorithm is a pre-trained optimal DDE on input data $\log \tilde{p}(x)$ and a learning rate $\delta$.}
	\label{algo:generator}
	\begin{algorithmic}[1]
		\STATE Initialize generator parameters $\phi$ 
		\STATE Initialize DDE $s^{\tilde{q}} = \argmin_{s} \mathcal{L}_{\mathrm{DDE}}(s; q, \sigma_{\eta}) $ with $q \sim g(z; \phi), z \sim \mathcal{N}(0,1)$
		\WHILE{not converged}
			\STATE $\phi = \phi - \delta \nabla_{\phi} \mathbb{E}_{x = g(z;\phi) + \eta} \left[ s^{\tilde{q}}(x) - \log \tilde{p}(x) \right]  $, with $z \sim \mathcal{N}(0,1), \eta \sim \mathcal{N}(0,\sigma_{\eta}^ 2)$
			\STATE // $q \sim g(z; \phi)$ now indicates the updated density using the updated $\phi$
			\STATE $s^{\tilde{q}} = \argmin_{s} \mathcal{L}_{\mathrm{DDE}}(s; q, \sigma_{\eta}) $ // In practice, we only take few optimization steps
			\STATE // $s^{\tilde{q}}$ is now the density (up to a constant) of $g(z;\phi) + \eta$
 		\ENDWHILE
	\end{algorithmic}
\end{algorithm}

\paragraph{Exact Sampling.}

Our objective involves reducing the KL divergence between the Gaussian smoothed generated density $\tilde{q}$ and the data density $\tilde{p}$. This also implies that the density $q$ obtained from sampling the generator $g(z;\phi)$ is identical with the data density $p$, without Gaussian smoothing, which can be expressed as the following corollary:

\begin{corollary}
Let $\tilde{p}$ and $\tilde{q}$ be related to densities $p$ and $q$, respectively, via convolutions using a Gaussian $k$, that is $\tilde{p} = p*k, \tilde{q} = q*k$. Then the smoothed densities $\tilde{p}$ and $\tilde{q}$ are the same if and only if the data density $p$ and the generated density $q$ are the same.
\end{corollary}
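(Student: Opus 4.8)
The \emph{if} direction is immediate: if $p = q$ then $\tilde{p} = p*k = q*k = \tilde{q}$. The content is in the \emph{only if} direction, and the plan is to pass to the Fourier domain, where convolution becomes multiplication and the Gaussian kernel becomes a nowhere-vanishing factor that can simply be cancelled.

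Concretely, I would first recall that $p$ and $q$ are probability densities on $\mathbb{R}^n$, hence elements of $L^1(\mathbb{R}^n)$, so their Fourier transforms $\widehat{p}$ and $\widehat{q}$ are well-defined, bounded, continuous functions. The Gaussian $k = \mathcal{N}(0,\sigma_\eta^2 I)$ has Fourier transform $\widehat{k}(\omega) = \exp(-\tfrac{1}{2}\sigma_\eta^2 \|\omega\|^2)$, which is strictly positive for every $\omega \in \mathbb{R}^n$. By the convolution theorem, $\widehat{\tilde{p}} = \widehat{p}\,\widehat{k}$ and $\widehat{\tilde{q}} = \widehat{q}\,\widehat{k}$.

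The second step is the cancellation: from $\tilde{p} = \tilde{q}$ we obtain $\widehat{\tilde p} = \widehat{\tilde q}$ pointwise, i.e.\ $\widehat{p}(\omega)\widehat{k}(\omega) = \widehat{q}(\omega)\widehat{k}(\omega)$ for all $\omega$. Since $\widehat{k}(\omega) \neq 0$ everywhere, we may divide to conclude $\widehat{p} = \widehat{q}$. Finally, by injectivity of the Fourier transform on $L^1$ (the uniqueness theorem), $p = q$ almost everywhere, which is the claim.

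The only real subtlety — the step I would be most careful about — is making sure the objects are regular enough for these manipulations, i.e.\ that $p,q \in L^1$ so the transforms exist and the uniqueness theorem applies. If instead one wants to allow $p$ and $q$ to be arbitrary probability measures rather than densities, the identical argument runs verbatim with characteristic functions in place of Fourier transforms of densities, using that a probability measure is determined by its characteristic function and that the characteristic function of a convolution is the product of the characteristic functions. No approximation or limiting argument is required; once one is in the transform domain the proof is a one-line division.
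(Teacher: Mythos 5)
Your argument is correct and is exactly the one the paper uses: the paper's one-line justification appeals to the convolution theorem and the fact that the Fourier transform of a Gaussian is nowhere zero, which is precisely the cancellation-in-the-Fourier-domain step you spell out. Your additional care about $p,q \in L^1$ and the injectivity of the Fourier transform simply makes explicit what the paper leaves implicit.
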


This follows immediately from the convolution theorem and the fact that the Fourier transform of Gaussian functions is non-zero everywhere, that is, Gaussian blur is invertible.

\paragraph{Relation to GANs.}

In the original GANs~\citep{Goodfellow2014GAN}, the generator is trained to minimize the Jensen-Shannon divergence between generated and real data distributions.
Our model is optimized to minimize the KL-divergence instead, which has been shown to achieve better likelihood scores compared to GANs~\citep{nowozin2016f}.
Moreover, we use the reverse KL-divergence loss in our training, which unlike forward KL-divergence, avoids saddle points when the two distributions are non-overlapping. This is because minimizing the reverse KL divergence can be reformulated as
\begin{align}
\argmin_{\tilde{q}} \KL(\tilde{q}||\tilde{p}) = \argmax_{\tilde{q}} E_{x\sim \tilde{q}}\left[ \log \tilde{p}(x)\right] + \mathcal{H}(\tilde{q}(x)),
\end{align}
which includes a term that attempts to maximize the entropy $\mathcal{H}$ of the generated distribution $\tilde{q}$.
Wasserstein-GANs address the same issue by using the Wasserstein distance between the two distributions to formulate their loss.
These models, however, require the discriminator network to guarantee Lipschitz continuity, which is imposed either by weight-clipping~\cite{arjovsky2017wasserstein} or gradient penalty methods~\citep{gulrajani2017improved}.
Our DDEs explicitly impose Gaussian-smoothness on the data distribution, which guarantees that the density is non-zero everywhere.
Additionally, the DDEs are trained to exactly constrain their gradients with respect to their inputs (Equation~\ref{eq:ddeloss}), without requiring additional techniques to control gradient magnitudes or weight clipping.

\section{Experiments}

\paragraph{2D Comparisons.}


Similar to~\citet{grathwohl2019ffjord}, we perform experiments for 2D density estimation and visualization over three datasets.
Additionally, we learn generative models. For our DDE networks, we used multi-layer perceptrons with residual connections. All networks have 25 layers, each with 32 channels and Softplus activation. For training we use 2048 samples per iteration to estimate the expected values.
Figure~\ref{fig:2Ddensity} shows the comparison of our method with Glow~\citep{Kingma2018GLOW}, BNAF~\citep{de2019block}, and FFJORD~\citep{grathwohl2019ffjord}.
Our DDEs can estimate the density accurately and capture the underlying complexities of each density.
Due to inherent smoothing as in kernel density estimation (KDE), our method induces a small blur to the density compared to BNAF. To demonstrate this effect, we show DDEs trained with both small and large noise standard deviations $\sigma_{\eta}=0.05$ and $\sigma_{\eta}=0.2$.
However, our DDE can estimate the density coherently through the data domain, whereas BNAF produces noisy approximation across the data.

Generator training and sampling is demonstrated in Figure~\ref{fig:2Ddensity} on the right. The sharp edges of the checkerboard samples imply that, due to invertibility of a small Gaussian blur, the generator learns to sample from the sharp target density even though the DDEs estimate noisy densities. While the generator update in theory requires DDE networks to be optimal at each gradient step, we take a limited number of 10 DDE gradient descent steps for each generator update to accelerate convergence. We summarize the training parameters used in these experiments the supplementary material.

\begin{figure*}[t]
\centering
\small
\setlength\tabcolsep{1.5pt}
\begin{tabular} {cccccccc}
& Real samples & Glow & BNAF & FFJORD & Ours \scriptsize{($\sigma_\eta:0.05$)} & Ours \scriptsize{($\sigma_\eta:0.2$)} & Ours generated \\
\raisebox{3.0\normalbaselineskip}[0pt][0pt]{\rotatebox[origin=c]{90}{Two Spirals}} &
\includegraphics[width=.13\textwidth]{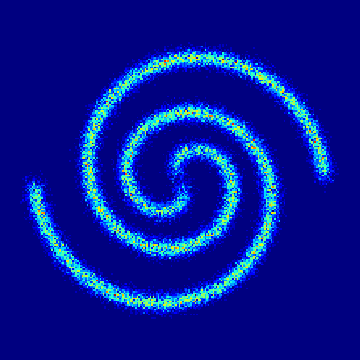} &
\includegraphics[width=.13\textwidth]{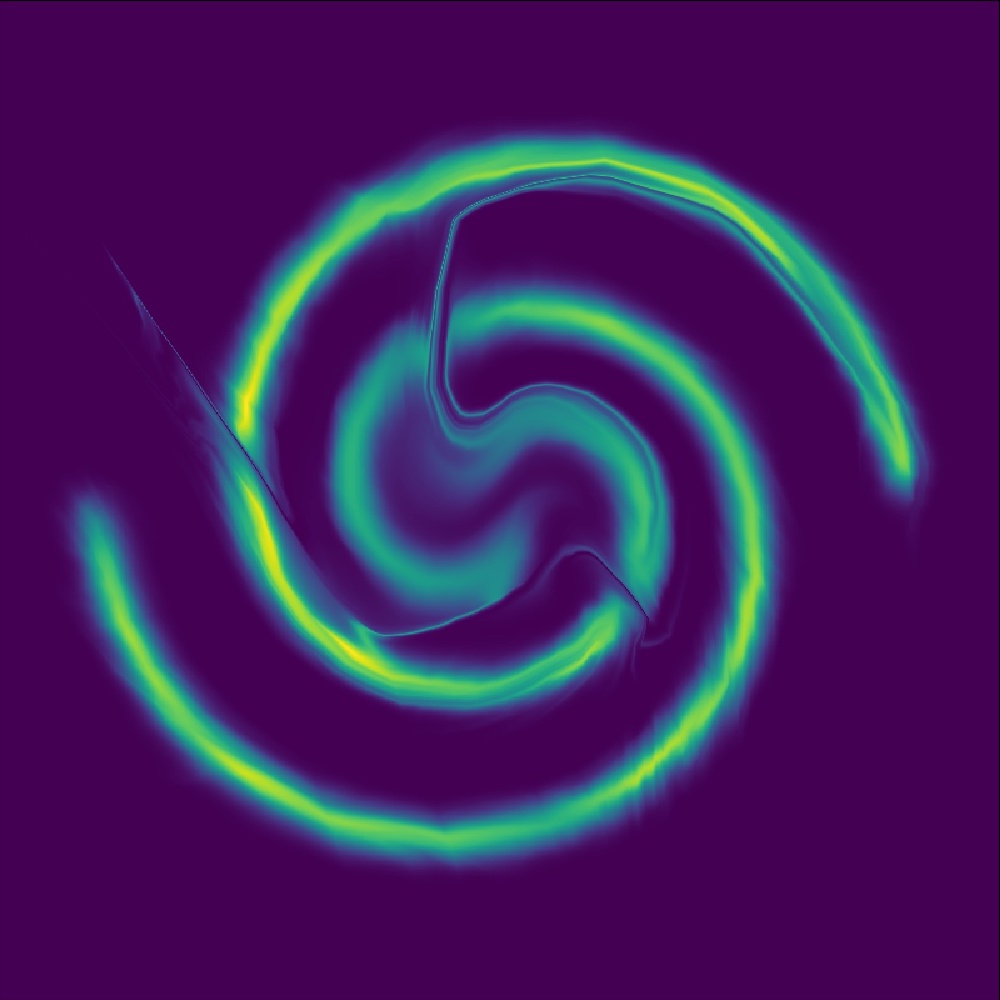} &
\includegraphics[width=.13\textwidth]{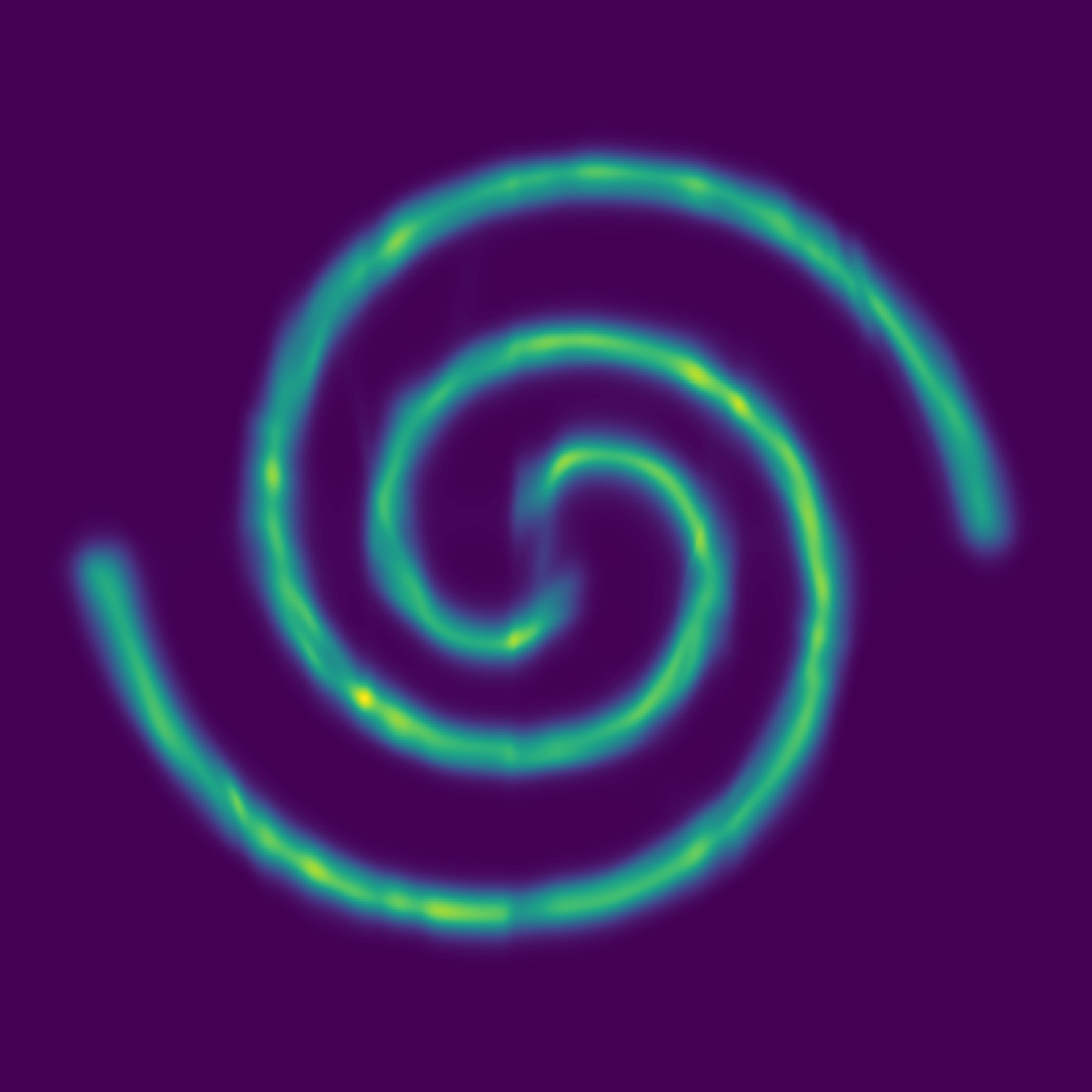} &
\includegraphics[width=.13\textwidth]{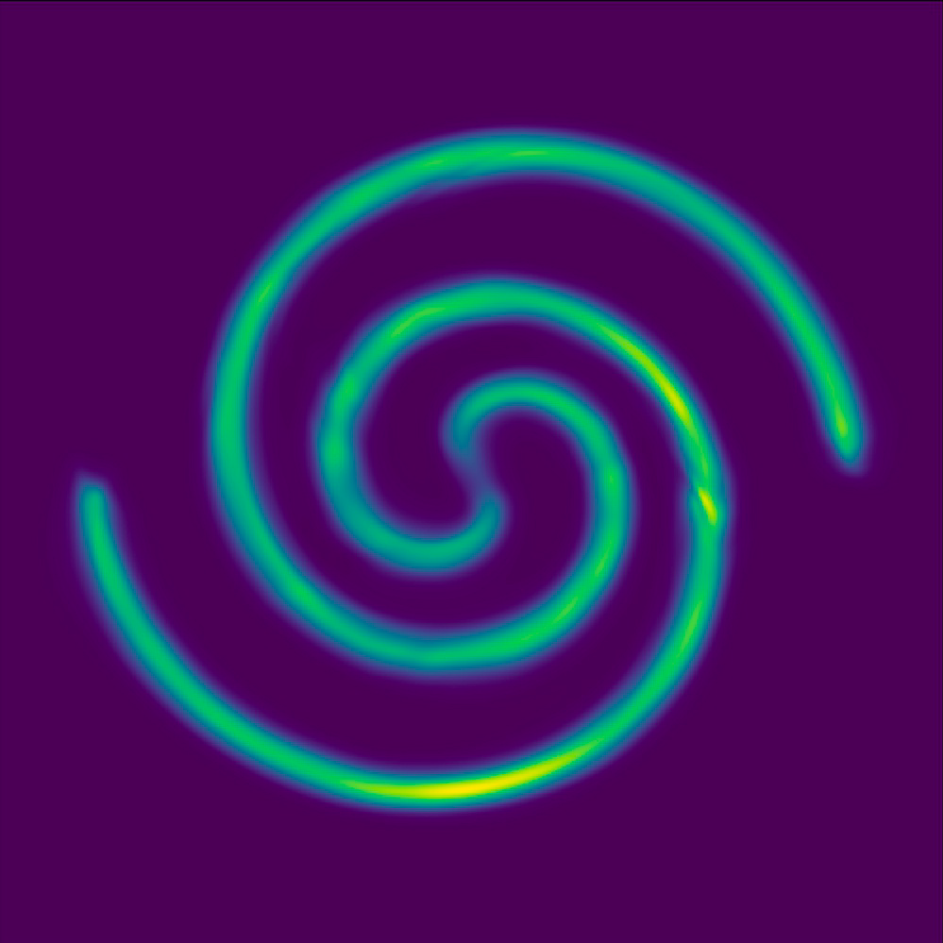} &
\includegraphics[width=.13\textwidth]{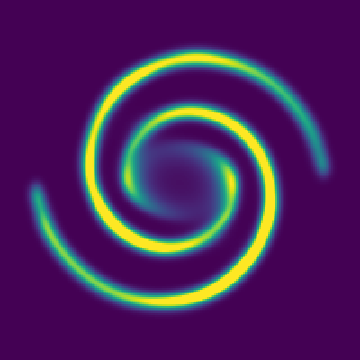} &
\includegraphics[width=.13\textwidth]{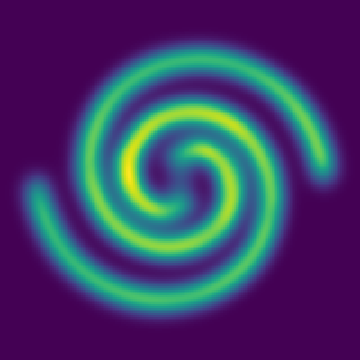} &
\includegraphics[width=.13\textwidth]{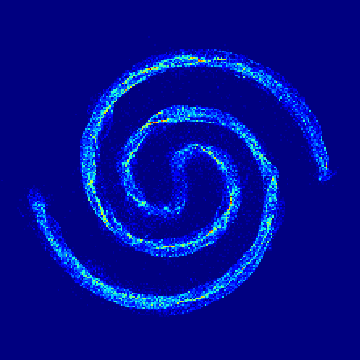} \\
\raisebox{3.0\normalbaselineskip}[0pt][0pt]{\rotatebox[origin=c]{90}{Checkerboard}} &
\includegraphics[width=.13\textwidth]{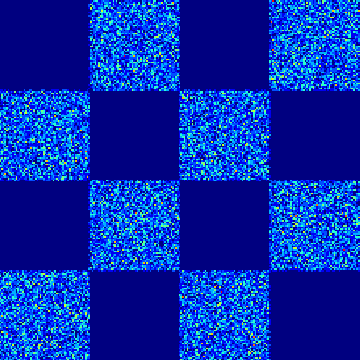} &
\includegraphics[width=.13\textwidth]{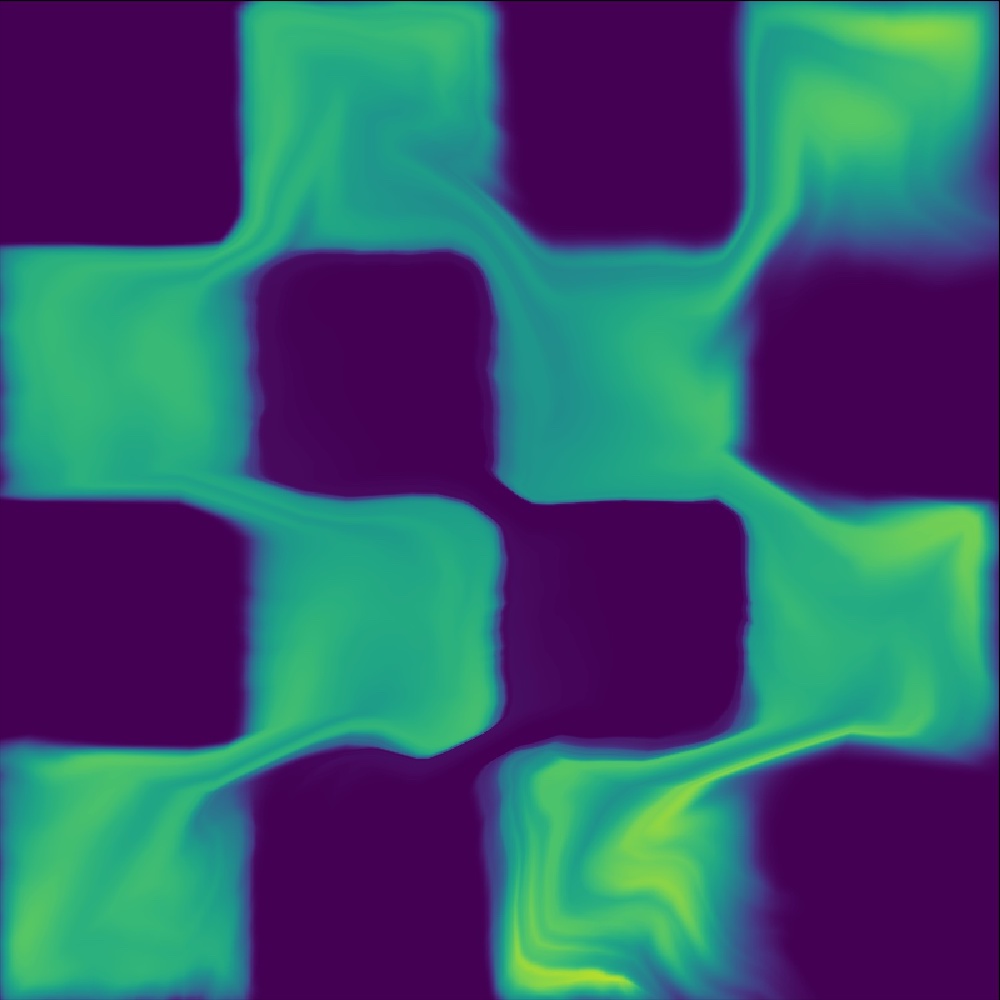} &
\includegraphics[width=.13\textwidth]{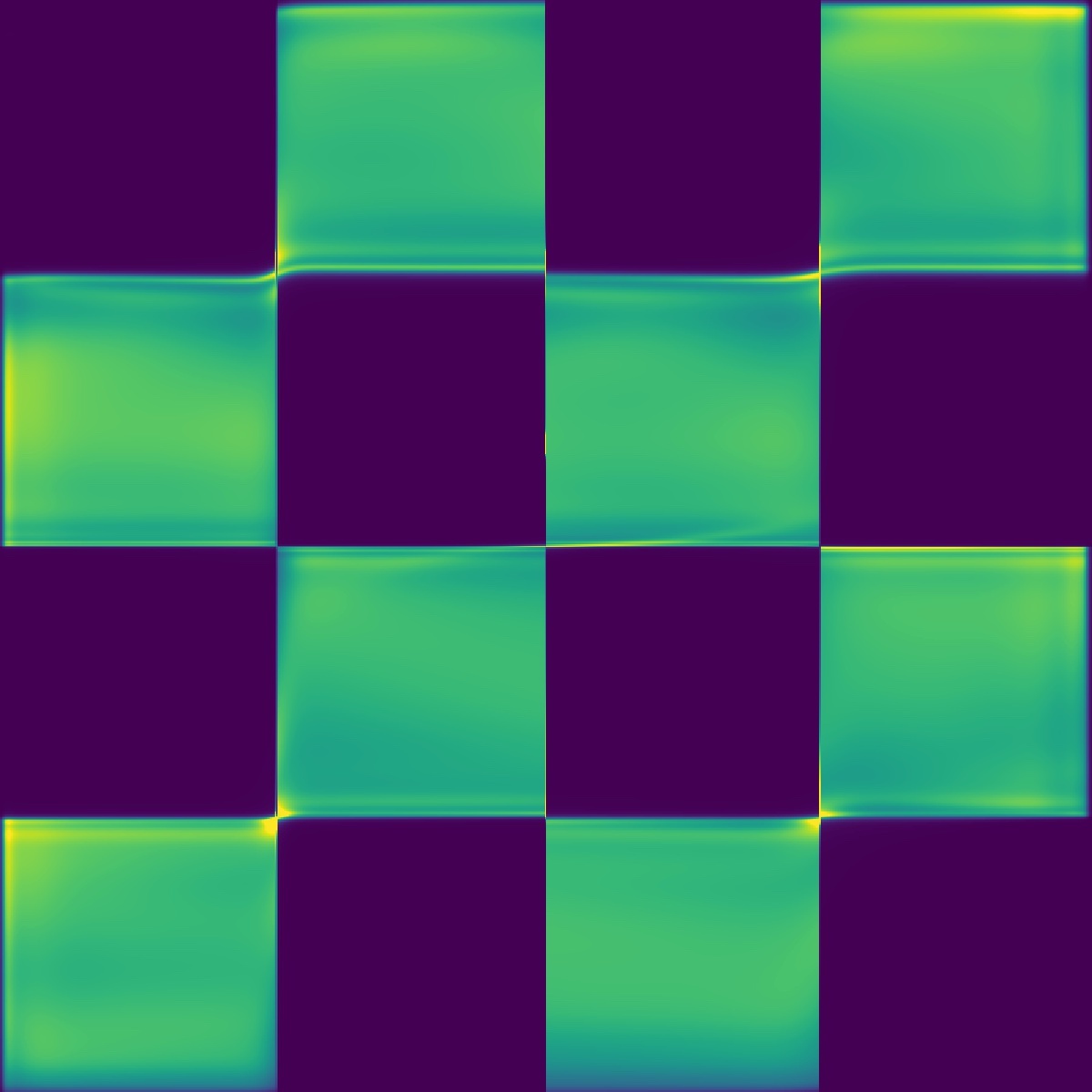} &
\includegraphics[width=.13\textwidth]{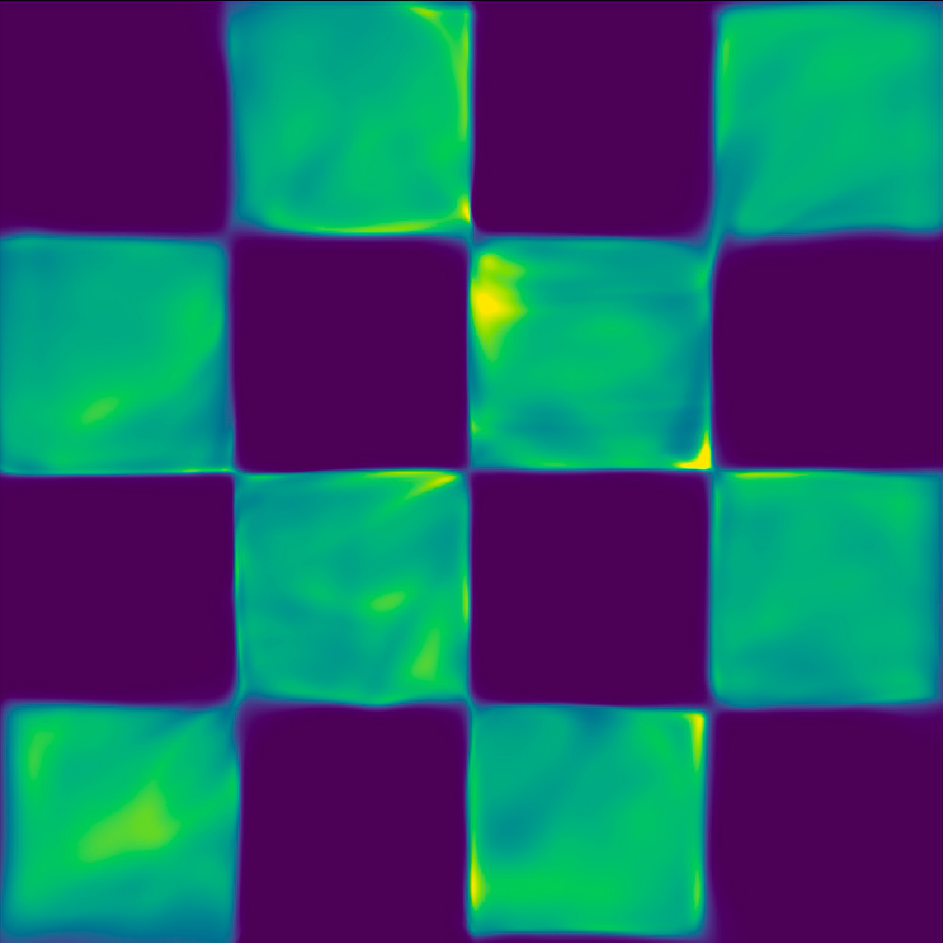} &
\includegraphics[width=.13\textwidth]{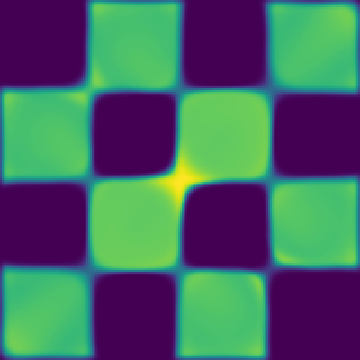} &
\includegraphics[width=.13\textwidth]{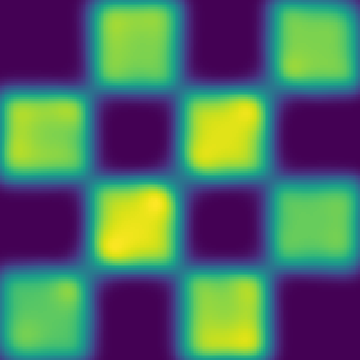} &
\includegraphics[width=.13\textwidth]{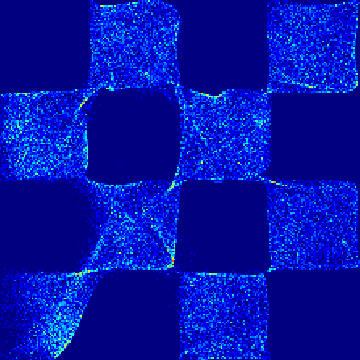}
\end{tabular}
\caption{Density estimation in 2D, showing that we can accurately capture these densities with few visual artifacts.
The rightmost column shows samples generated using our generative model training.}
\label{fig:2Ddensity}
\end{figure*}

\paragraph{MNIST.} Figure~\ref{fig:mnist} illustrates our generative training on MNIST~\citep{lecun1998mnist} using Algorithm~\ref{algo:generator}. We use a dense block architecture with fully connected layers here and refer to the supplementary material for the network and training details, including additional results for Fashion-MNIST~\citep{xiao2017fashion}. Figure~\ref{fig:mnist} shows qualitatively that our generator is able to replicate the underlying distributions. In addition, latent-space interpolation demonstrates that the network learns an intuitive and interpretable mapping from normally distributed latent variables to samples of the data distribution. 

 \begin{figure*}[t]
\centering
\begin{tabular} {cc}
(a) Generated samples & (b) Real samples\\
\includegraphics[width=.47\textwidth]{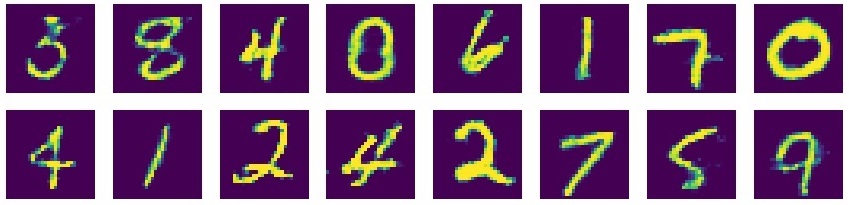} &
\includegraphics[width=.47\textwidth]{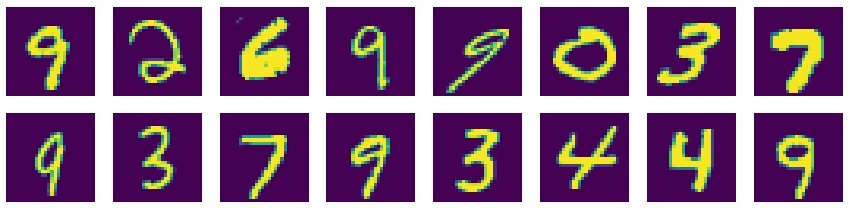} 
\end{tabular}  \\
(c) Interpolated samples using our model
\includegraphics[width=.98\textwidth]{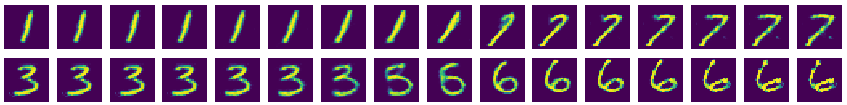} 
\caption[MNIST generated results.]
{
Generated MNIST (a), from the dataset (b), and latent space interpolation (c).
}
\label{fig:mnist}
\end{figure*}

\paragraph{CelebA.} Figure~\ref{fig:celebA} shows additional experiments on the CelebA dataset~\citep{liu2015deep}. 
The images in the dataset have $32 \times 32 \times 3$ dimensions and we normalize the pixel values to be in range $[-0.5, 0.5]$.
To show the flexibility of our algorithm with respect to neural network architectures, here we use a style-based generator~\citep{karras2019style} architecture for our generator network. Please refer to the supplementary material for network and training details. Figure~\ref{fig:celebA} shows that our approach can produce natural-looking images, and the model has learned to replicate the global distribution with a diverse set of images and different characteristics.

\begin{figure*}[t]
\centering
\begin{tabular} {cc}
(a) Generated samples & (b) Real samples\\
\includegraphics[width=.47\textwidth, trim={0 128px 0 0},clip]{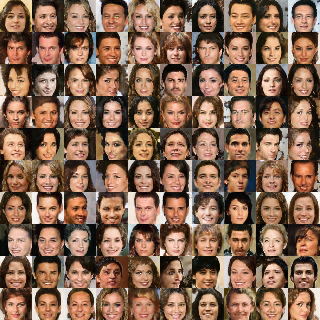} &
\includegraphics[width=.47\textwidth, trim={0 128px 0 0},clip]{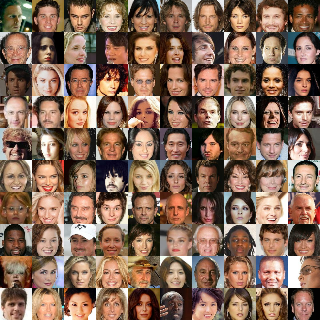} 
\end{tabular} 
\caption[celebA generated results.]
{
Generated results on $32 \times 32$ images from the celebA dataset~\citep{liu2015deep}.
}
\label{fig:celebA}
\end{figure*}

\paragraph{Quantitative Evaluation with Stacked-MNIST.} We perform a quantitative evaluation of our approach based on the synthetic Stacked-MNIST~\citep{metz2016unrolled} dataset, which was designed to analyse mode-collapse in generative models. The dataset is constructed by stacking three randomly chosen digit images from MNIST to generate samples of size $28\times28\times3$.
This augments the number of classes to $10^3$, which are considered as distinct modes of the dataset. Mode-collapse can be quantified by counting the number of nodes generated by a model. Additionally, the quality of the distribution can be measured by computing the KL-divergence between the generated class distribution and the original dataset, which has a uniform distribution in terms of class labels. Similar to prior work~\citep{metz2016unrolled}, we use an external classifier to measure the number of classes that each generator produces by separately inferring the class of each channel of the images. 

Figure~\ref{fig:stackedmnist} reports the quantitative results for this experiment by comparing our method with well-tuned GAN models. DCGAN~\citep{radford2015unsupervised} implements a basic GAN training strategy using a stable architecture. WGAN uses the Wasserstein distance~\citep{arjovsky2017wasserstein}, and WGAN+GP includes a gradient penalty to regularize the discriminator~\citep{gulrajani2017improved}. For a fair comparison, all methods use the DCGAN network architecture. Since our method requires two DDE networks, we have used fewer parameters in the DDEs so that in total we preserve the same number of parameters and capacity as the other methods.
For each method, we generate batches of 512 samples per training iteration and count the number of classes within each batch (that is, the maximum number of different labels in each batch is 512). We also plot the reverse KL-divergence to the uniform ground truth class distribution. Using the two measurements we can see how well each method replicates the distribution in terms of diversity and balance. Without fine-tuning and changing the capacity of our network models, our approach is comparable to modern GANs such as WGAN and WGAN+GP, which outperform DCGAN by a large margin in this experiment. 

We also report results for sampling techniques based on Score-Matching. We trained a Noise Conditional Score Network (NCSN) parametrized with a UNET architecture~\citep{ronneberger2015u}, which is then followed by a sampling algorithm using the Annealed Langevin Dynamics (ALD) as described by~\citet{Song2019GMG}. We refer to this method as UNET+ALD. We also implemented a model based on our approach called DDE+ALD, where we used our DDE network in combination with iterative Langevin sampling. While our training loss is equivalent to the score-matching objective, the DDE network outputs a scalar and explicitly enforces the score to be a conservative vector field by computing it as the gradient of its scalar output. DDE+ALD uses the spatial gradient of the DDE for iterative sampling with ALD~\citep{Song2019GMG}, instead of our proposed direct, one-step generator as described in Section~\ref{sec:generativemodels}. We observe that DDE+ALD is more stable compared to the UNET+ALD baseline, even though the UNET achieves a lower loss during training. We believe that this is because DDEs guarantee conservativeness of the distribution gradients (i.e. scores), which leads to more diverse and stable data generation as we see in Figure~\ref{fig:stackedmnist}. Furthermore, our approach with direct sampling outperforms both UNET+ALD and DDE+ALD.

\begin{figure*}[t]
\centering
\begin{tabular} {cc}
(a) Generated modes per batch & (b) KL-divergence\\
\includegraphics[width=.4\textwidth]{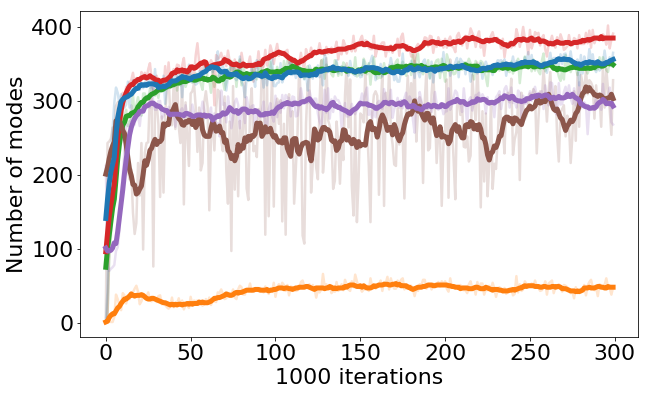} &
\includegraphics[width=.4\textwidth]{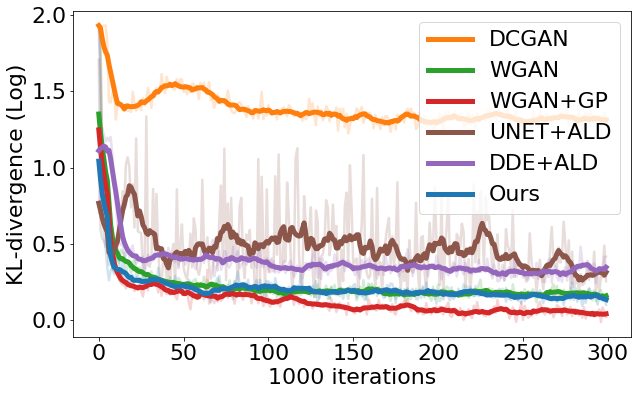} 
\end{tabular} 
\caption[Stacked-MNIST dataset results.]
{
Mode-collapse experiment results on Stacked-MNIST as a function of training iterations (for discriminator or DDE).
(a) Number of generated modes per batch of size 512.
(b) Reverse KL-divergence between the generated and the data distribution in the logarithmic domain.
}
\label{fig:stackedmnist}
\end{figure*}

\paragraph{Real Data Density Estimation.}

We follow the experiments in BNAF~\citep{de2019block} for density estimation, which includes the POWER, GAS, HEPMASS, and MINIBOON datasets~\citep{asuncion2007uci}. Since DDEs estimate densities up to their normalizing constant, we approximate the constant using Monte Carlo estimation here.
Similarly, \citet{li2019learning} use sampling to estimate the normalizing constant.
We show average log-likelihoods over test sets and compare to state-of-the-art methods for normalized density estimation in Table~\ref{tbl:densityestimation}.
We have omitted the results of the BSDS300 dataset~\citep{martin2001database}, since we could not estimate the normalizing constant reliably (due to high dimensionality of the data).
To train our DDEs, we used Multi-Layer Perceptrons (MLP) with residual connections between each layer.
All networks have 25 layers, with 64 channels and Softplus activations, except for GAS and HEPMASS, which employ 128 channels.
We trained the models for 400 epochs using learning rate of \num{2.5e-4} with linear decay with scale of $2$ every 100 epochs.
Similarly, we started the training by using noise standard deviation $\sigma_{\eta}=0.1$ and decreased it linearly with the scale of $1.1$ up to a dataset specific value, which we set to \num{5e-2} for POWER, \num{4e-2} for GAS, \num{2e-2} for HEPMASS, and $0.15$ for MINIBOON.
We estimate the normalizing constant via importance sampling using a Gaussian distribution with the mean and variance of the DDE input distribution.
We average 5 estimations using 51200 samples each (we used 10 times more samples for GAS), and we indicate the variance of this average in Table~\ref{tbl:densityestimation}. 

\newcommand{\addvar}[1]{{\tiny{$\pm #1$}}}
\newcommand{\descrcell}[2]{%
  \scriptsize
  \begin{tabular}[t]{@{}c@{}}\normalsize#1\\ \tiny{#2}\end{tabular}%
}


\begin{table*}[t]
\bgroup
\begin{center}
\begin{tabular}[c]{l c c c c }
\hlineB{3}
\raisebox{-0.35\normalbaselineskip}[0pt][0pt]{Model} & \descrcell{POWER}{$d=6, N\approx 2M$} & \descrcell{GAS}{ $d=8, N\approx 1M$} & \descrcell{HEPMASS}{$d=21, N\approx 500K$} & \descrcell{MINIBOON}{$d=43, N\approx 36K$ } \\
\hline
\cite{Dinh2017NVP} & $0.17$ \addvar{.01} & $8.33$ \addvar{.14} & $-18.71$ \addvar{.02} & $-13.55$ \addvar{.49} \\
\cite{Kingma2018GLOW} & $0.17$ \addvar{.01} & $8.15$ \addvar{.40} & $-18.92$ \addvar{.08} & $-11.35$ \addvar{.07} \\
\cite{germain2015made}* & $0.40$ \addvar{.01} & $8.47$ \addvar{.02} & $-15.15$ \addvar{.02} & $-12.27$ \addvar{.47} \\
\cite{papamakarios2017masked} & $0.24$ \addvar{.01} & $10.08$ \addvar{.02} & $-17.73$ \addvar{.02} & $-12.24$ \addvar{.45} \\
\cite{papamakarios2017masked}* & $0.30$ \addvar{.01} & $9.59$ \addvar{.02} & $-17.39$ \addvar{.02} & $-11.68$ \addvar{.44} \\
\cite{grathwohl2019ffjord} & $0.46$ \addvar{.01} & $8.59$ \addvar{.12} & $-14.92$ \addvar{.08} & $-10.43$ \addvar{.04} \\
\cite{Huang2018NAF} & $0.62$ \addvar{.01} & $11.96$ \addvar{.33} & $-15.09$ \addvar{.40} & $-8.86$ \addvar{.15} \\
\cite{oliva2018transformation} & $0.60$ \addvar{.01} & $\textbf{12.06}$ \addvar{.02} & $-13.78$ \addvar{.02} & $-11.01$ \addvar{.48} \\
\cite{de2019block} & $0.61$ \addvar{.01} & $\textbf{12.06}$ \addvar{.09} & $-14.71$ \addvar{.38} & $-8.95$ \addvar{.07} \\
\hline
\citet{li2019learning} & - & - & $\leq-20$  & $\leq-40$ \\
\textbf{Ours} & $\textbf{0.97}$ \addvar{.18} & $9.73$ \addvar{1.14} & $\textbf{-11.3}$ \addvar{.16} & $\textbf{-6.94}$ \addvar{1.81} \\
\hlineB{3}
\end{tabular}
\end{center}
\egroup
\caption{Average log-likelihood comparison in four datasets~\citep{asuncion2007uci}.
The top rows includes dataset size and dimensionality, bottom rows are normalized by sampling.
The upper section includes methods that estimate normalized densities.
Results of~\citet{li2019learning} are read from the bar plots reported in their article.
*Mixture of Gaussions. Best performances are in bold.}
\label{tbl:densityestimation}
\end{table*}

\paragraph{Discussion.}

Our approach relies on a key hyperparameter $\sigma_{\eta}$ that determines the training noise for the DDE, which we currently set manually. In the future we will investigate strategies to determine this parameter in a data-dependent manner. Another challenge is to obtain high-quality results using complex, high-dimensional data such CIFAR or high-resolution images. In practice, one strategy is to combine our approach with latent embedding learning methods~\citep{pmlr-v80-bojanowski18a}, in a similar fashion as proposed by~\citet{Hoshen_2019_CVPR}. The robustness of our technique with very high-dimensional data could potentially also be improved by leveraging slicing techniques~\citep{Song2019SSM,Wu_2019_CVPR}. Finally, we uses three networks to learn a generator (a DDE each for the input and generated data, and the generator). Our generator training approach, however, is independent of the type of density estimator, and techniques other than DDEs could also be used. 

\section{Conclusions}

We presented a novel approach to learn generative models using denoising density estimators (DDE), and our theoretical analysis proves that our training algorithm converges to a unique optimum.
Further, our technique 
does not require specific neural network architectures or ODE integration. We achieve state of the art results on a standard log-likelihood evaluation benchmark compared to recent techniques based on normalizing flows, continuous flows, and autoregressive models, and we demonstrate successful generators on diverse image data sets. Finally, a quantitative evaluation using the stacked MNIST data set shows that our approach avoids mode collapse similarly as state of the art Wasserstein GANs.


\section*{Broader Impact}

We propose a novel generative model that also provides a density estimate, which has potentially very broad applicability with many types of data. Generative models could be used to easily author visual media, for example, which would allow individuals or small teams to create content for education, training, or entertainment very easily. The density estimate could be used to leverage the generative model as a prior in highly underconstrained inverse problems, such as image or video restoration and various computational imaging techniques. Nefarious applications include deepfakes that attempt to mislead the consumers of the generated content. Generative models also replicate any biases that are inherent in the training data.

\bibliographystyle{apalike}
{\small
\bibliography{DDE}}

\end{document}